\def\verbatim@font{\normalfont\ttfamily\footnotesize}
\newcommand{\blfootnote}[1]{%
  \begingroup
  \renewcommand{\thefootnote}{}%
  \footnotetext{#1}%
  \addtocounter{footnote}{-1}%
  \endgroup
}
\begin{document}

\title{AN OPTIMAL ALGORITHM FOR CHANGING FROM LATITUDINAL TO LONGITUDINAL FORMATION OF AUTONOMOUS AIRCRAFT SQUADRONS}

\author[1]{Paulo André S. Giacomin,}
          {pasgiacomin@uesc.br, }

\address{Universidade Estadual de Santa Cruz\\
Departamento de Ciências Exatas e Tecnológicas - Área de Informática\\
Ilhéus-BA, Brasil}

\address{Instituto Tecnológico de Aeronáutica\\
Departamento de Sistemas e Controle - Divisão de Engenharia Eletrônica \\
São José dos Campos-SP, Brasil}

\author[2]{Elder M. Hemerly}
          {hemerly@ita.br}

\twocolumn[

\maketitle

\selectlanguage{english}

\begin{abstract}
This work presents an algorithm for changing from latitudinal to 
longitudinal formation of autonomous aircraft squadrons. The maneuvers are defined  dynamically by using a predefined set of 3D basic maneuvers. This formation change is necessary when the squadron has to perform tasks which demand both formations, such as  lift-off,  georeferencing, obstacle avoidance and landing. Simulations  show that the formation change is done without collision. The time complexity analysis of the transformation algorithm reveals that its efficiency is optimal, and the proof of correctness ensures its longitudinal formation features.
\end{abstract}

\keywords{Algorithms, Intelligent Agents, Multi-agent Systems, Robotics, Simulation }

\selectlanguage{brazil}

\begin{abstract}
Este trabalho apresenta um algoritmo de mudança de formação em latitude para formação longitudinal de esquadrilhas de aeronaves autônomas. As manobras são definidas dinamicamente utilizando-se um conjunto pré-definido de manobras 3D básicas. Esta mudança de formação é necessária quando a esquadrilha tem que desenvolver tarefas que demandam ambas as formações, tais como decolagem, georreferenciamento, desvio de obstáculos e aterrissagem. As simulações mostram que a mudança de formação é feita sem colisão. A análise de complexidade de tempo do algoritmo de transformação revela que sua eficiência é ótima, e a prova de correção  assegura suas características de formação longitudinal.
\end{abstract}

\keywords{Algoritmos, Agentes Inteligentes, Sistemas Multiagentes, Robótica, Simulação }
]
\selectlanguage{english}

\section{Introduction}

Recently, it has been possible to see a growing interest in the development of autonomous 
aircraft that can cooperate with police and other organizations in the solution of 
public security problems. The basic motivations are:  autonomous  agents can deal with
dangerous or health-threatening problems, like fires, violence monitoring, inspection of
nuclear areas, deforestation monitoring and monitoring of areas with armed conflict, without
exposing humans to the risks.

\blfootnote{ Published in the XI Brazilian Symposium on Intelligent Automation, 2013; the scientific content remains unchanged. \href{https://doi.org/10.5281/zenodo.21460781}{DOI: 10.5281/zenodo.21460781} Licence: CC-BY-NC-ND. }

When several agents are used in the solution of these problems, some advantages arise:
a) distributed systems are usually more robust  than  centralized ones, and b) it is
possible to make better use of sensors, since they can be shared by the network.
As an additional example, when autonomous aircraft squadrons are used in georeferencing, 
the visual field of the cameras increases, as shown in Figure \ref{fig:georeferenciamento}, and the
captured images can be mosaicked. Besides, autonomous aircraft typically
fly at low heights; hence, good-quality images can be captured.

\begin{figure}
 \centering
 \includegraphics[scale=1]{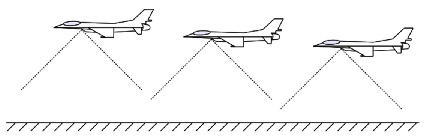}
 \caption{The visual field of the aircraft is increased by using squadrons, enabling larger mosaics.}
 \label{fig:georeferenciamento}
\end{figure}

The latitudinal formation presented in Figure \ref{fig:latitudinal-formation}
is an attractive formation to deal with the georeferencing problem,
since better area coverage can be achieved. However, obstacles can appear
during the flight, and it may be necessary to change the squadron formation to avoid them. 
For example, the squadron can assume the longitudinal formation presented in Figure \ref{fig:longitudinal-formation} for
collision avoidance. Thus, if the first aircraft succeeds in avoiding the collision, all other
aircraft in the squadron can also avoid the obstacle by using the same behavior. 

\begin{figure}[h]
 \centering
  ~~~~~~~~\includegraphics[bb=14 14 227 103,scale=1]{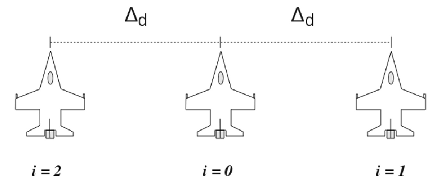}
 \caption{The latitudinal formation. It is possible to achieve good area coverage with it.}
 \label{fig:latitudinal-formation}
\end{figure}

\begin{figure}[h]
 \centering
 ~~~~~~~~\includegraphics[bb=14 14 244 70,scale=0.9]{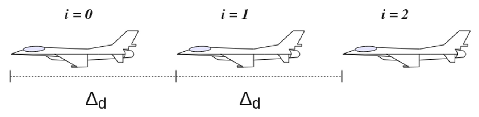}
 \caption{The longitudinal formation. It can be used for collision avoidance, landing, and lift-off.}
 \label{fig:longitudinal-formation}
\end{figure}

The longitudinal formation is also necessary when the squadron
is landing and taking off. Therefore, if the same squadron needs to perform
georeferencing, landing,  lift-off  and obstacle avoidance, it will eventually be necessary
for the squadron to change between its latitudinal and longitudinal formations. 

Therefore, the problem considered in this work is: assuming that 
there are $n$ aircraft in latitudinal formation, equally spaced by 
$\Delta_d$ meters, we want to develop an algorithm that changes 
the squadron to the longitudinal formation, where the aircraft 
will also be equally spaced by $\Delta_d$ meters, without collision among them.

This problem involves aircraft formation, 
trajectory generation and control. The nonlinear model predictive control approach
 of \cite{Chao} and the leader-follower approach of \cite{You} and \cite{Gu} 
deal with the formation problem, but changing between well-defined geometric formations
is not considered. Trajectory generation made by using optimization algorithms is usually found in the literature. Some examples are the works of
\cite{Cheng} and \cite{Xu}, but works like these do not 
focus on well-defined geometric formations. Other techniques are also used in the trajectory
definition, like the geometric moments, controlled via nonlinear gradient \cite{Morbidi}, 
the modern matrix analysis used by \cite{Coker}, the combination of the hybrid navigation 
architecture with the local obstacle avoidance methodology and with the model
predictive control \cite{Jansen}, the navigation functions \cite{Roussos}, and the
variation of rapidly-exploring random trees \cite{Neto}. But
they also do not consider the changes between well-defined geometric formations during
the flight. Control techniques, like reinforcement learning \cite{Santos} do not
focus on the change of formation.

The formation reconfiguration is 
studied by \cite{Venkataramanan}, where the aircraft move their position 
inside a formation, and the same formation is considered before and after 
the reconfiguration. The autonomous decision-making architecture \cite{Knoll}
also considers this problem, but neither \cite{Venkataramanan} nor  \cite{Knoll} considers
the transition between different formations. 
After an exhaustive literature search,  no algorithm was found dealing  with the problem considered here.
Thus, the main contributions of this work are:

\begin{enumerate}

 \item An algorithm for changing from the latitudinal to the longitudinal
formation of the squadron. The time complexity analysis of the proposed algorithm shows its 
efficiency is optimal. 

\item A proof of correctness of the proposed algorithm, which ensures its longitudinal formation features.

\item  The simulation results by considering a case study, in which 
the aircraft do not collide.

\end{enumerate}

These contributions are described in the next sections, 
namely: \ref{sec:methodology} - Methodology, \ref{sec:simulation} - Simulation Results, 
\ref{sec:theoretical-analysis} - Theoretical Analysis and \ref{sec:conclusion} - Conclusion. 

\section{Methodology}
\label{sec:methodology}

The proposed algorithm needs to create references to be followed by the aircraft.
To do this, a set of maneuvers is specified. 

\subsection{The Maneuver Schemes}

The proposed algorithm employs two 3D basic maneuver schemes, 
as shown in Figures \ref{fig:go-forward} and \ref{fig:to-turn}.
For details, see \cite{Giacomin-cobem}.
Algorithms are used to create the references by using the specifications presented 
in these figures. Here they are called
FW and C to implement the  \textit{go-forward}  and  \textit{to-turn}  maneuvers, respectively.
The  \textit{go-forward}  interface is FW($P$, $\varphi$, $\beta$, $vel$, $d$, $T$)
and the  \textit{to-turn}  interface is C($P$, $\varphi$, $\beta$, $\theta$, $\alpha$, $vel$, $r$, $T$),
where $P$ is the initial position $(x_0, y_0, h_0)$, $T$ is a time vector to be filled
with time intervals, and the other parameters are shown in 
Figures \ref{fig:go-forward} and \ref{fig:to-turn}. These algorithms are used by the transition algorithm described in Subsection \ref{subsection:the-transition-algorithm}.

\begin{figure}[h]
 \centering
  \includegraphics[bb=0 0 158 83,scale=1]{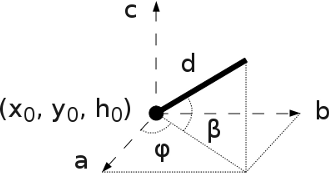}
  \caption{ \textit{Go-forward}  maneuver: two angles, a point and a distance are specified, with $\vec{a} \parallel \vec{x}$, $\vec{b} \parallel \vec{y}$ and $\vec{c} \parallel \vec{h}$.}
  \label{fig:go-forward}
\end{figure}

\begin{figure}[h]
 \centering
 \subfigure[Part a]{\includegraphics[bb=0 0 121 83,scale=0.8]{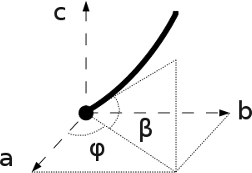}}
 \subfigure[Part b]{\includegraphics[bb=0 0 118 83,scale=0.8]{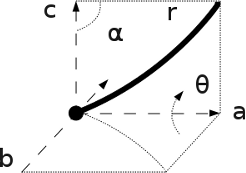}}
  \caption{ \textit{To-turn}  maneuver: four angles, a 3D point,  and  a radius need to be specified.}
  \label{fig:to-turn}
\end{figure}

\subsection{The Transition Algorithm}
\label{subsection:the-transition-algorithm}

The transition algorithm is shown in Algorithm \ref{algo:fltl}. It is called 
FLATLO due to the initial characters that describe its function:
From Latitudinal To Longitudinal formation. It basically performs 
the maneuvers presented in Figure \ref{fig:geometria-trajetoria} by each 
aircraft on the left of the squadron, and the equivalent mirrored  maneuver  for each aircraft on the right of the squadron.

\begin{algorithm}
  
  \caption{The transition algorithm}\label{algo:fltl}
  
  \begin{algorithmic}[1]  
   \Procedure{FLATLO}{$i$, $\Delta_d$, $vel$, $r$, $\varphi$, $\beta$, $P$}

    \State $T \gets \varnothing$ 
    \State $Ref \gets \varnothing$ 

    \State $\Delta_{m} \gets move\_forward(i, \Delta_d)$ \label{linha:deslocamento}

    \State $side \gets get\_side(i, N)$

    \State $a \gets FW(P, \varphi, \beta, vel, \Delta_m, T)$ \label{linha:primeiro_movimento}
    \State $Ref \gets Ref \cup a$

    \If{$|Ref| \neq 0$}
      \State $P_{1} \gets last(Ref)$
    \Else
      \State $P_{1} \gets P$
    \EndIf

    \State $k \gets 0$

    \While {$k < 4$}

      \If{$|Ref| \neq 0$} \label{linha:inicio_encaixe}
	\State $P_{1} \gets last(Ref)$ 
	\State $P_{0} \gets penultimate(Ref)$
	\State $D \gets distance(P_1, P_0)$
	\State $\varphi_2 \gets sin^{-1}(\Delta_y / D)$
	\State $\beta_2 \gets sin^{-1}(\Delta_h / D)$ 

      \Else
	\State $\varphi_2 = \varphi$
        \State $\beta_2 = \beta$
      \EndIf \label{linha:final_encaixe}

      \If{$k = 0$} 
	\If{$side = LEFT$} \label{linha:inicio_primeira_curva}
	  \State $a \gets C(P_1, \varphi_2, \beta_2, \pi, \frac{\pi}{4}, vel, r, T)$
	\Else
	  \State $a \gets C(P_1, \varphi_2, \beta_2, 0, \frac{\pi}{4}, vel, r, T)$
	\EndIf \label{linha:final_primeira_curva}
      \ElsIf{$k = 1$}
	\If{$even(N)$} \label{linha:delta_y_comeco}
	  \State $\Delta \gets \Delta_d / 2 + \lfloor i/ 2 \rfloor \cdot \Delta_d$
	\Else
	    \If{$i \neq 0 $}
	      \State $\Delta \gets \Delta_d \cdot \lfloor (i+1) / 2 \rfloor$
	    \Else
	      \State $\Delta \gets 0$
	    \EndIf
	\EndIf \label{linha:delta_y_final}

	\State $\Delta_2 \gets \sqrt{2} \cdot (\Delta - 2 \cdot r \cdot (1 - cos(\pi / 4)))$
	\State $a \gets FW(P_1, \varphi_2, \beta_2, vel, \Delta_2, T)$ \label{linha:vai_em_direcao_ao_centro}
      \ElsIf{$k = 2$}

	\If{$side = LEFT$} \label{linha:inicio_segunda_curva}
	  \State $a \gets C(P_1, \varphi_2, \beta_2, 0, \frac{\pi}{4}, vel, r, T)$
	\Else
	  \State $\varphi_3 \gets - \varphi_2$
	  \State $a \gets C(P_1, \varphi_3, \beta_2, \pi, \frac{\pi}{4}, vel, r, T)$
	\EndIf \label{linha:final_segunda_curva}

      \Else
	\State $a \gets FW(P_1, \varphi_2, \beta_2, vel, K_2, T)$ \label{linha:manobra_final}
      \EndIf

    \State $Ref \gets Ref \cup a$
    \State $k \gets k + 1$
    \EndWhile

    \State \textbf{return} $Ref$ 
   \EndProcedure
  \end{algorithmic} 
\end{algorithm}

\begin{algorithm}
  
  \caption{The advancing algorithm}\label{algo:advancing}
  
  \begin{algorithmic}[1]  
   \Procedure{move\_forward}{$i$, $\Delta_d$}

    \If{$odd(N)$}
      \If{$odd(i)$}
	\State $k \gets \lfloor (N - i) / 2 \rfloor$ 
	\State $\Delta_m \gets k \cdot \Delta_d + 2 \cdot (k-1) \cdot \Delta_d$
      \Else
	\State $\Delta_m \gets 3 \cdot \Delta_d \cdot \lfloor (N - i - 1) / 2 \rfloor $
      \EndIf
    \Else
      \If{$odd(i)$}
	\State $\Delta_m \gets 3 \cdot \Delta_d \cdot \lfloor (N - i - 1) / 2 \rfloor $
      \Else
	\State $k \gets \lfloor (N - i) / 2 \rfloor$ 
	\State $\Delta_m \gets k \cdot \Delta_d + 2 \cdot (k-1) \cdot \Delta_d$
      \EndIf
    \EndIf

    \State \textbf{return} $\Delta_m$ 
   \EndProcedure
  \end{algorithmic} 
\end{algorithm}

\begin{figure}[h]
 \centering
 \includegraphics[bb=0 0 150 173,scale=1]{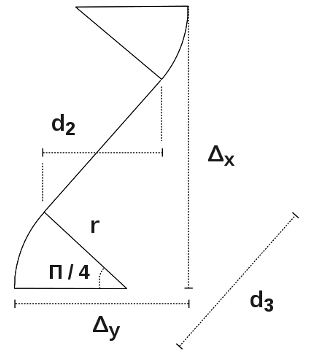}
 \caption{An aircraft goes to the middle of the squadron
when it assumes the longitudinal formation. }
 \label{fig:geometria-trajetoria}
\end{figure}

The interface of the transition algorithm is FLATLO($i$, $\Delta_d$, $vel$, $r$, $\varphi$, $\beta$, $P$), 
where $i$ is the aircraft index shown in Figures \ref{fig:latitudinal-formation} and \ref{fig:longitudinal-formation}, 
$vel$ is the aircraft airspeed, $r$ is the radius of the  \textit{to-turn}  maneuvers
used by the transition algorithm, $\varphi$ and $\beta$ are shown in Figure \ref{fig:to-turn}a, and
$P$ is the aircraft initial position $(x_0, y_0, h_0)$. 

Algorithm \ref{algo:fltl} is executed by each aircraft processor  in parallel.  For all aircraft, except the last one, the forward displacement, called $\Delta_m$, is calculated at Line \ref{linha:deslocamento}. See
Algorithm \ref{algo:advancing} for details. 
Thereafter, each aircraft executes four maneuvers. The values that are determined between
Lines \ref{linha:inicio_encaixe} and \ref{linha:final_encaixe} are used to fit each maneuver with 
the next one. 

Each aircraft moves toward the longitudinal line of the squadron between Lines
\ref{linha:inicio_primeira_curva} and \ref{linha:final_primeira_curva}. At Line \ref{linha:vai_em_direcao_ao_centro}, 
the aircraft goes forward toward the longitudinal line. Between Lines \ref{linha:inicio_segunda_curva}
and \ref{linha:final_segunda_curva}, the aircraft moves to smoothly enter the longitudinal line.
Finally, the aircraft flies over the longitudinal line at Line \ref{linha:manobra_final}. 

The references created by Algorithm \ref{algo:fltl} were tested by using the aircraft model presented in Subsection \ref{subsection:model}.

\subsection{The Aircraft Model}
\label{subsection:model}

The simple and well-tested aircraft state space model, \cite{Anderson}, is employed  and
is given by

\begin{equation}
 \label{eq:velocidade}
 \frac{d V}{d t} = g \cdot \left[ \frac{(T - D)}{W} - sin(\gamma) \right]
\end{equation} 
\begin{equation}
 \label{eq:gama}
 \frac{d \gamma}{d t} = \frac{g}{V} \cdot [ n \cdot cos(\mu) - cos(\gamma)]
\end{equation} 
\begin{equation}
 \label{eq:heading}
 \frac{d \chi}{d t} = \frac{g \cdot n \cdot sin(\mu)}{V \cdot cos(\gamma)}
\end{equation} 
\begin{equation}
 \label{eq:x}
 \frac{d x}{d t} = V \cdot cos(\gamma) \cdot cos(\chi)
\end{equation} 
\begin{equation}
 \label{eq:y}
 \frac{d y}{d t} = V \cdot cos(\gamma) \cdot sin(\chi)
\end{equation} 
\begin{equation}
 \label{eq:z}
 \frac{d h}{d t} = V \cdot sin(\gamma)
\end{equation} 
where the state variables are: airspeed (V), flight path angle ($\gamma$), flight path heading ($\chi$), 
and the position variables (x, y, h). 

A control scheme is presented in \cite{Giacomin-cobem} by using the above aircraft model.
Here, the references created by Algorithm \ref{algo:fltl} are submitted to this control scheme. 

\section{Simulation Results}
\label{sec:simulation}

Algorithm \ref{algo:fltl} is programmed in parallel by using the C++ programming language
and the GNU Message Passing Interface (MPI) Compiler. One processor is allocated for each aircraft.
The graphics are plotted by using the software Gnuplot. 

The initial condition for all aircraft is: height: $3,050.00$ meters, airspeed:
$30.5$ m/s, $\Delta_d = 18,300.00$ meters, assuming the latitudinal formation. 
The references are created by Algorithm \ref{algo:fltl} by using a radius of $4,575.00$ meters  and  different airspeeds for each aircraft,
which allow all aircraft to arrive in longitudinal formation at the same  instant.
The simulation is done by using the Runge-Kutta-4
algorithm. The results are shown in Figures \ref{fig:squadron-references}, \ref{fig:squadron-trajectories} and
\ref{fig:velocity-dinamic}.

\begin{figure}[h]
 \centering
  \includegraphics[bb=0 0 220 134,scale=1]{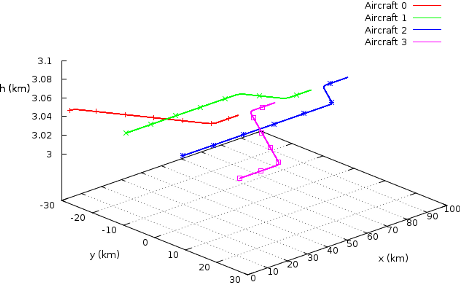}
 \caption{The aircraft references. The marks are used in the aircraft crossing analysis.}
 \label{fig:squadron-references}
\end{figure}

\begin{figure}[h]
 \centering
 \includegraphics[bb=0 0 220 135,scale=1]{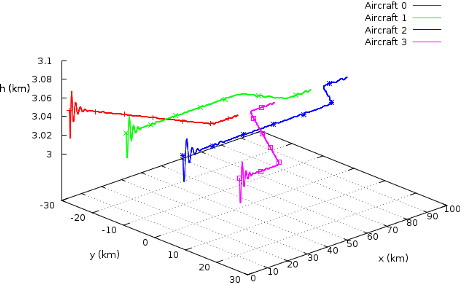}
 \caption{The aircraft trajectories. The aircraft succeeded in following the references. }
 \label{fig:squadron-trajectories}
\end{figure}

The marks shown in Figures \ref{fig:squadron-references} and \ref{fig:squadron-trajectories} are
used to analyze the aircraft crossing. 
When the aircraft number zero is flying over the third mark, the aircraft 
number one is ahead, and when aircraft number three is flying over the fourth mark, the aircraft number two is ahead. An automatic verification  showed that the minimum distance between aircraft $0$ and $1$ and
between aircraft $2$ and $3$ was $14.5$ km and $18.3$ km, respectively. 
Therefore, there is no collision. 

The simulation is done by considering a noise of $0.25\%$ for airspeed,
heading and gamma angles. From Figures \ref{fig:squadron-references}, \ref{fig:squadron-trajectories}, and \ref{fig:velocity-dinamic} it is possible to conclude that Algorithm \ref{algo:fltl}
creates the references correctly and that the aircraft succeeded in following the references.

\begin{figure}[h]
 \centering
 \includegraphics[bb=0 0 211 139,scale=1]{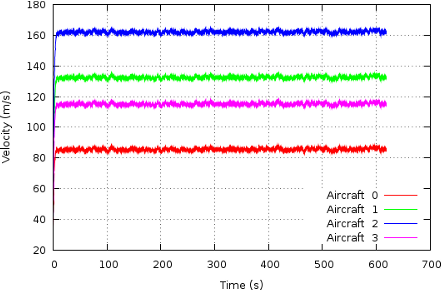}
 \caption{All aircraft velocities. }
 \label{fig:velocity-dinamic}
\end{figure}

It remains to analyze the time complexity of Algorithm \ref{algo:fltl}
and to prove its  correctness.  This is done in Section \ref{sec:theoretical-analysis}.

\section{Theoretical Analysis}
\label{sec:theoretical-analysis}

The theoretical analysis of Algorithm \ref{algo:fltl} is divided into two parts: its time 
complexity analysis and its proof of correctness.

\subsection{The Time Complexity Analysis}

The  \textit{go-forward}  and  \textit{to-turn}  functions are executed over a fixed
number of maneuvers, and all other functions used in Algorithm \ref{algo:fltl}
have constant time complexity. 
Let $Q = \{q_0, \cdots, q_{|Q|-1}\}$ be the 
set of maneuvers, and let $R = \{r_0, \cdots, r_{|Q|-1}\}$ be the
set of references, with $r_i = ref(q_i)$, and the function \textit{ref}
be implemented by using the functions FW or C in Algorithm \ref{algo:fltl}.
Then, $R$ is a set of sets, and $|r_i|$ is the number of references
for each maneuver $q_i$. The time complexity of Algorithm \ref{algo:fltl} is

\[O(|r_0| + |r_1| + |r_2| + |r_3| + |r_4|)\]
\[m = |r_0| + \cdots + |r_4| \]
\[O(m)\]
where $m$ represents the total  number of  time steps for each aircraft trajectory.
\textit{Since this is also the lower bound for the problem, it is concluded that the FLATLO
algorithm is optimal with regard to the time complexity \cite{Cormen}}. 

\subsection{The Proof of Correction}

The proof of correctness takes into account that the maneuvers executed by
Algorithm \ref{algo:fltl} are basically described in Figure \ref{fig:geometria-trajetoria}, 
except the first one, which is created by Algorithm \ref{algo:advancing}.

Note that when an aircraft in latitudinal formation advances the distance $\Delta_d$
before executing the trajectory of Figure \ref{fig:geometria-trajetoria}, 
$\Delta_x$ advances by $\Delta_d$ along its longitudinal formation.
This information is used in the proof of the next theorem. 

\begin{theorem}
\label{theorem:correcao}
Let  us  assume a set of aircraft initially flying in latitudinal formation, with
each aircraft equally spaced by $\Delta_d$ from its neighbors. 
Then, if the FLATLO algorithm is executed by each aircraft, 
the longitudinal formation, with spacing $\Delta_d$ between its neighbors, is achieved. 
\end{theorem}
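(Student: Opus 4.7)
The plan is to reduce the claim to two independent checks: (a) each aircraft terminates on the common longitudinal axis (the central fore/aft line of the squadron), and (b) the along-axis separation between every two aircraft of consecutive indices is exactly $\Delta_d$. I would fix coordinates so that the squadron initially lies along the $y$-axis with aircraft $i$ at $(0,y_i,h_0)$, and the longitudinal axis is $y=0$. Because FLATLO treats the right side of the squadron as the mirror image of the left (compare the $side = LEFT$ branches in lines~\ref{linha:inicio_primeira_curva}--\ref{linha:final_primeira_curva} and \ref{linha:inicio_segunda_curva}--\ref{linha:final_segunda_curva}), it suffices to verify (a) and (b) for the aircraft on one side; the two sides will meet on $y=0$ by construction.

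First I would analyse the single-aircraft trajectory of Figure~\ref{fig:geometria-trajetoria} in isolation, ignoring the preliminary advance. The four FLATLO maneuvers consist of a $\pi/4$ turn towards the longitudinal axis, a straight segment $FW(\cdot,\Delta_2)$, a symmetric $\pi/4$ turn aligning the velocity with $\hat{x}$, and a final forward segment. A direct trigonometric computation shows that the lateral ($y$) displacement accumulated by the two $\pi/4$ arcs of radius $r$ together with the straight segment of length $\Delta_2 = \sqrt{2}\,(\Delta - 2r(1-\cos(\pi/4)))$ equals exactly $\Delta$, the value set in lines~\ref{linha:delta_y_comeco}--\ref{linha:delta_y_final}. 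Choosing $\Delta$ to be the initial offset $|y_i|$ of aircraft $i$ then guarantees (a): each aircraft finishes on $y=0$. The even/odd case split for $\Delta$ simply reflects whether the latitudinal formation is centred on an aircraft (odd $N$) or on the midpoint between the two central aircraft (even $N$).

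Next I would handle (b), which is where the paper's key observation comes in: the trajectory of Figure~\ref{fig:geometria-trajetoria} moves the $x$-coordinate forward by the same amount as it moves the $y$-coordinate sideways, so a preliminary advance of $\Delta_d$ in $x$ translates into a longitudinal separation of $\Delta_d$ in the final formation. Consequently (b) reduces to showing that \texttt{move\_forward}($i,\Delta_d$) produces displacements $\Delta_m(i)$ such that $\Delta_m(i) - \Delta_m(i{+}1) = \Delta_d$ for consecutive indices (with the usual sign convention accounting for the ordering in Figure~\ref{fig:longitudinal-formation}). I would verify this by substituting the four cases (parities of $N$ and $i$) from Algorithm~\ref{algo:advancing} and checking that the differences collapse to $\Delta_d$; the innermost pair of aircraft needs separate inspection because one of them has $\Delta_m = 0$. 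Combining (a) and (b) yields the longitudinal formation with uniform spacing $\Delta_d$.

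The main obstacle I foresee is purely bookkeeping: the case analysis in step~(b) has four branches from Algorithm~\ref{algo:advancing} crossed with the left/right symmetry, and one has to verify that the combination of $k\cdot\Delta_d+2(k{-}1)\Delta_d$ for one parity and $3\Delta_d\lfloor(N{-}i{-}1)/2\rfloor$ for the other produces consecutive differences of exactly $\Delta_d$ rather than $2\Delta_d$ or $3\Delta_d$, and that the central aircraft with $\Delta_m=0$ fits the same pattern. The geometric step~(a), by contrast, is a single exercise in the length of a polyline consisting of two circular arcs and a chord, and should go through once $\Delta_2$ is substituted.
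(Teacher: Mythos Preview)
Your decomposition into (a) termination on the central longitudinal axis and (b) uniform $\Delta_d$ spacing along that axis is exactly the structure of the paper's proof, and your handling of (a) via the arc--chord--arc geometry and the choice of $\Delta_2$ is correct.

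The gap is in your reduction for (b). You assert that (b) ``reduces to showing that $\Delta_m(i)-\Delta_m(i{+}1)=\Delta_d$ for consecutive indices'', but Algorithm~\ref{algo:advancing} does \emph{not} deliver this: in two of the four parity cases one gets $\Delta_m(i)-\Delta_m(i{+}1)=2\Delta_d$, not $\Delta_d$, so the verification you plan would appear to fail in half the branches. The term you have dropped is the forward contribution of the diagonal leg itself. You correctly note that the zigzag of Figure~\ref{fig:geometria-trajetoria} advances $x$ by the same amount $\Delta(i)$ that it displaces $y$, but $\Delta(i)$ is not the same for all aircraft: lines~\ref{linha:delta_y_comeco}--\ref{linha:delta_y_final} make $\Delta(i+1)-\Delta(i)$ equal to $0$ when $i$ and $i{+}1$ form a mirrored pair about the centre line, and equal to $\Delta_d$ when they do not. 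Hence the correct identity to check is
\[
\bigl(\Delta_m(i)+\Delta(i)\bigr)-\bigl(\Delta_m(i{+}1)+\Delta(i{+}1)\bigr)=\Delta_d,
\]
and this is precisely what the paper verifies: in the mirrored case the $\Delta_m$-difference is $\Delta_d$ and the $\Delta$-difference is $0$; in the non-mirrored case the $\Delta_m$-difference is $2\Delta_d$ and the $\Delta$-difference is $-\Delta_d$. Once you carry the $\Delta(i)$ term through your bookkeeping, the four-branch case analysis you outline goes through cleanly. A related consequence is that your appeal to left/right mirror symmetry does not by itself settle (b), since consecutive indices $i$ and $i{+}1$ lie on opposite sides; the interleaving of the two sides is exactly the mirrored-pair case above and must be checked explicitly.
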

 
\begin{proof}
It is considered $n$ aircraft in the squadron.
Clearly, if $n$ is even and $i$ is even, it can be concluded by 
analyzing Algorithm \ref{algo:advancing} that $\Delta_m^i - \Delta_m^{i+1} = \Delta_d$. 
Then

\[\Delta_{x}^i = k \cdot \Delta_d + 2 \cdot r \cdot sin(\pi/4) + \Delta_y - 2 \cdot r \cdot (1 -cos(\pi / 4))\]
\[\Delta_{x}^{i+1} = (k-1) \cdot \Delta_d  + 2 \cdot r \cdot sin(\pi/4) + \Delta_y - \cdots \]
\[\cdots -  2 \cdot r \cdot (1 -cos(\pi / 4))\]
\[\Delta_{x}^i - \Delta_{x}^{i+1} = \Delta_d\]
 which  is the obvious case where the two aircraft $i$ and $i+1$ are mirrored by the middle line. If $i$ is odd,
it follows, by analyzing Algorithm \ref{algo:advancing}, that $\Delta_m^i - \Delta_m^{i+1} = 2 \cdot \Delta_d$. Then

\[\Delta_{x}^i = k \cdot \Delta_d + 2 \cdot r \cdot sin(\pi/4) + \Delta_y - 2 \cdot r \cdot (1 -cos(\pi / 4))\]
\[\Delta_{x}^{i+1} = (k-2) \cdot \Delta_d + 2 \cdot r \cdot sin(\pi/4) + (\Delta_d + \Delta_y) - \cdots \]
\[ \cdots -  2 \cdot r \cdot (1 -cos(\pi / 4))\]
\[\Delta_{x}^i - \Delta_{x}^{i+1} = k \cdot \Delta_d - (k-2) \cdot \Delta_d - \Delta_d\]
\[\Delta_{x}^i - \Delta_{x}^{i+1} = \Delta_d\]

On the other hand, if $n$ is odd and $i$ is even, it follows, by analyzing Algorithm \ref{algo:advancing}, that
$\Delta_m^i - \Delta_m^{i+1} = 2 \cdot \Delta_d$,  which is  the same case that happens when $n$ is even and $i$
is odd. If $n$ is odd and $i$ is odd, then it follows, by analyzing Algorithm \ref{algo:advancing}, that
$\Delta_m^i - \Delta_m^{i+1} = \Delta_d$, which has the same result as that obtained when $n$ is even and $i$ is even.
Therefore, $\Delta_{x}^i - \Delta_{x}^{i+1} = \Delta_d$ for every $n$ and for every $i$. 

Similar reasoning can be applied to $\Delta_y$. Lines \ref{linha:delta_y_comeco} to \ref{linha:delta_y_final}
show that $\Delta^{i+2} - \Delta^i = \Delta_d$, for every $n$ and every $i$. Then, if $n$ is even

\[\Delta_{y}^i = 2 \cdot r \cdot (1 -cos(\pi / 4)) + \Delta_d / 2 + k \cdot \Delta_d + \Delta_x - \cdots \]
\[ \cdots - 2 \cdot r \cdot sin(\pi/4)\]
\[\Delta_{y}^{i+2} = 2 \cdot r \cdot (1 -cos(\pi / 4)) + \Delta_d / 2 + \Delta_d +  \cdots \]
\[ \cdots + (k-1) \cdot \Delta_d + \Delta_x - 2 \cdot r \cdot sin(\pi/4)\]
\[\Delta_y^{i+2} - \Delta_y^i = 0\]
and if $n$  is  odd it follows that

\[\Delta_{y}^i = 2 \cdot r \cdot (1 -cos(\pi / 4)) + k \cdot \Delta_d + \Delta_x - \cdots \]
\[ \cdots - 2 \cdot r \cdot sin(\pi/4)\]
\[\Delta_{y}^{i+2} = 2 \cdot r \cdot (1 -cos(\pi / 4)) + \Delta_d + (k-1) \cdot \Delta_d + \cdots \]
\[ \cdots + \Delta_x - 2 \cdot r \cdot sin(\pi/4)\]
\[\Delta_y^{i+2} - \Delta_y^i = 0\]
and therefore, $\Delta_y^{i+2} - \Delta_y^i = 0$ for every $n$ and for every $i$. 
Since this same reasoning can be applied for other coordinate system bases, and since
all aircraft arrive in longitudinal formation at the same instant,
the proof follows.
\end{proof}

\section{Conclusion}
\label{sec:conclusion}

An algorithm for changing from latitudinal to
longitudinal formation for autonomous aircraft squadrons is proposed in this paper. Despite the relevance of this problem, an extensive literature review did not produce relevant results.
 
The proposed FLATLO algorithm's time complexity is equal to the problem lower  bound; hence,  it is optimal  \cite{Cormen}.

It was proved that if the squadron is initially in latitudinal
formation, with each aircraft equally spaced from its neighbors
by distance $\Delta_d$, then the proposed algorithm makes the
squadron change its formation to the longitudinal one,
keeping the same distance $\Delta_d$ from each aircraft
and its neighbors. 

The theoretical analysis was confirmed by the simulations showing that
the aircraft do not collide during the formation transition, and
that the references were created correctly by the proposed algorithm, 
such that they could be followed by a control scheme. 
Additionally, since the aircraft operate at different velocities, the number of aircraft has to be limited, and the velocities have to be verified at design time, for safety reasons.

\end{document}